\newtheorem{theorem}{Theorem}
\newtheorem{definition}{Definition}
\title{Quantifying Robustness to Adversarial Word Substitutions}
\author{
    Yuting Yang,\textsuperscript{\rm 1,2}\equalcontrib \ 
    Pei Huang,\textsuperscript{\rm 2,3}\equalcontrib \
    FeiFei Ma,\textsuperscript{\rm 2,3,4}\footnote{Corresponding Authors} \
    Juan Cao\textsuperscript{\rm 1,2} \
    Meishan Zhang \textsuperscript{\rm 5}\
    Jian Zhang\textsuperscript{\rm 2,3}\footnotemark[2]
    Jintao Li \textsuperscript{\rm 1}
}
\begin{document}

\maketitle

\begin{abstract}
    Deep-learning-based NLP models are found to be vulnerable to word substitution perturbations. Before they are widely adopted, the fundamental issues of robustness need to be addressed. Along this line, we propose a formal framework to evaluate word-level robustness. First, to study safe regions for a model, we introduce robustness radius which is the boundary where the model can resist any perturbation. As calculating the maximum robustness radius is computationally hard, we estimate its upper and lower bound. We repurpose attack methods as ways of seeking upper bound and design a pseudo-dynamic programming algorithm for a tighter upper bound. Then verification method is utilized for a lower bound. Further, for evaluating the robustness of regions outside a safe radius, we reexamine robustness from another view: quantification. A robustness metric with a rigorous statistical guarantee is introduced to measure the quantification of adversarial examples, which indicates the model's susceptibility to perturbations outside the safe radius. The metric helps us figure out why state-of-the-art models like BERT can be easily fooled by a few word substitutions, but generalize well in the presence of real-world noises.

\end{abstract}

\section{Introduction}

Deep learning models have achieved impressive improvements on various NLP tasks. However, they are found to be vulnerable to input perturbations, such as paraphrasing \cite{SinghGR18}, inserting character \cite{BelinkovB18} and replacing words with similar ones \cite{Ren19}. In this paper, we focus on word substitution perturbation \cite{textfooler,NeekharaHDK19,seme} as shown in Figure \ref{introa}, in which the output of a model can be altered by replacing some words in the input sentence while maintaining the semantics. Before deep learning models are widely adopted in practice, understanding their robustness to word substitution is critical. 

\begin{figure}[ht]
	\centering
	\includegraphics[width=1.0\linewidth]{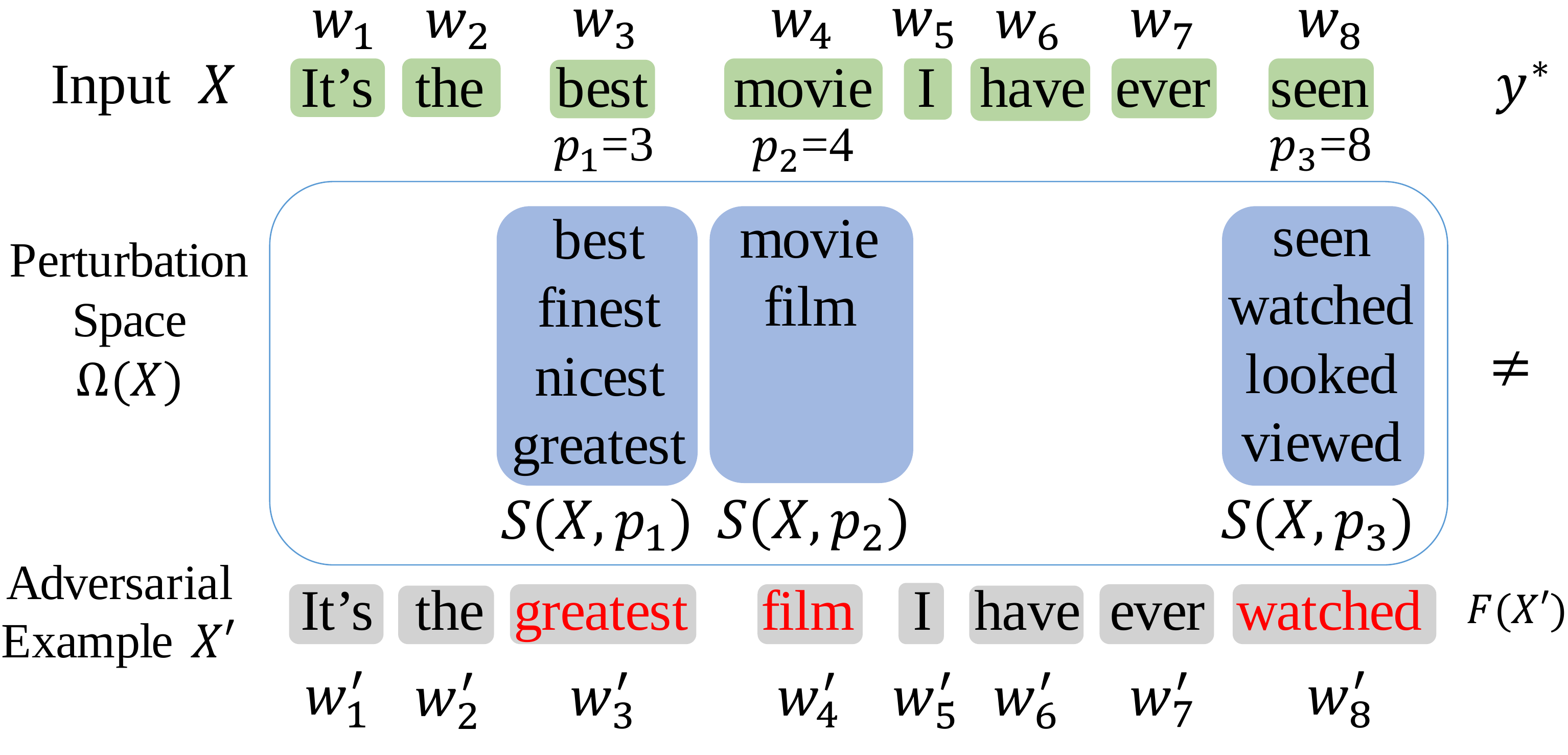}
	\caption{An example of word substitution perturbation.}
	\label{introa}
\end{figure}

\begin{figure}[ht]
	\centering
	\includegraphics[scale=0.27]{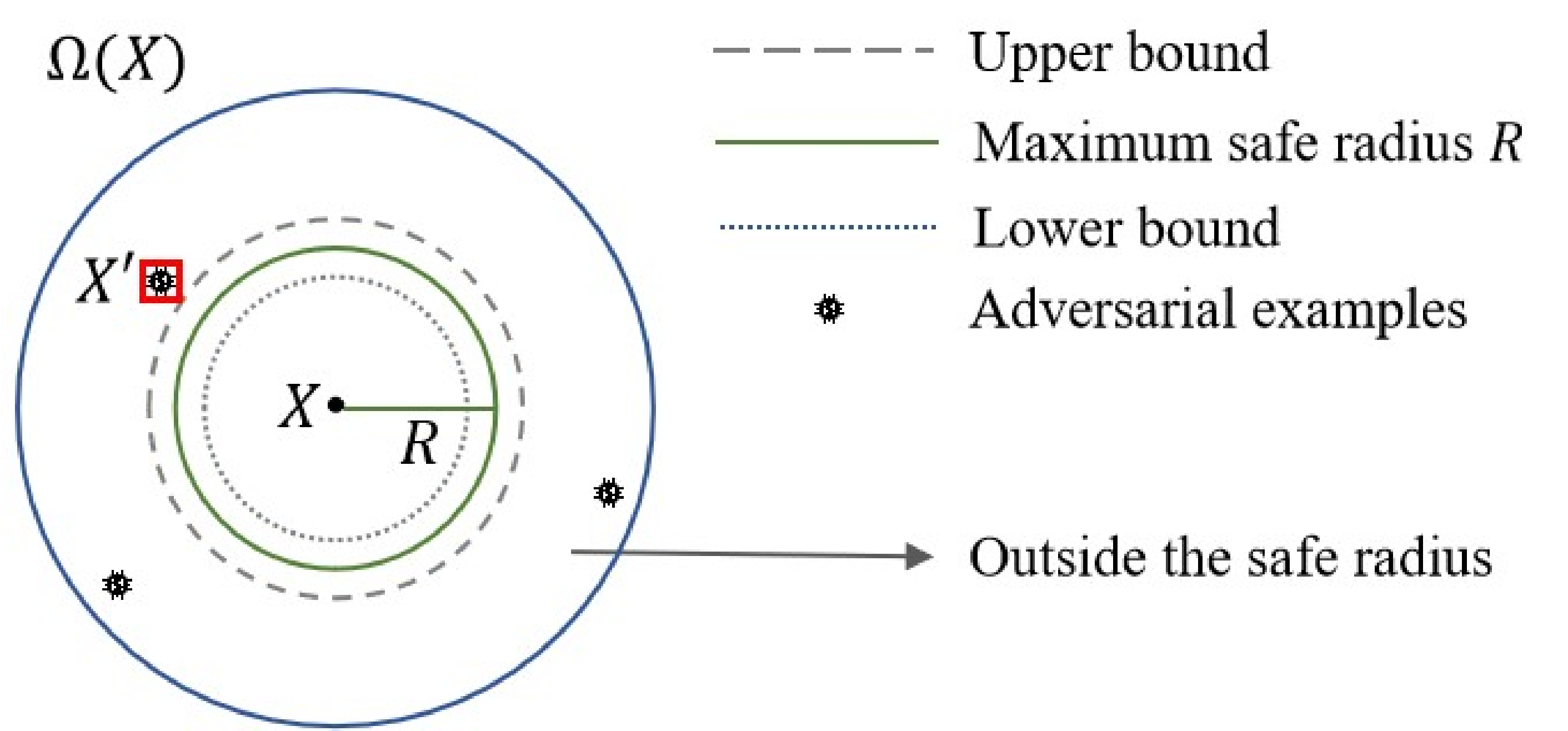}
	\caption{Diagram of our methods for evaluating robustness: (1) evaluating maximum safe radius $R$ via obtaining its upper and lower bound; (2) measuring the model's vulnerability outside the safe radius via a robustness metric $PR$.}
	\label{introb}
\end{figure}

In recent years, several studies focus on generating adversarial examples \cite{textfooler,Wang0D021} or certify the absence of adversarial examples in the whole perturbation space \cite{JiaRGL19,Huang19,YeGL20}. However, almost all current deep learning models are unable to be regarded as absolutely robust under such a yes-or-no binary judgment. Along this line, some deeper questions can be asked. Where is the safe boundary of a model to resist perturbation? Why can a well-trained NLP model be fooled by small perturbations but generalize well to real-world inputs with noises? Does the existence of an adversarial example in the exponential input space completely destroy the defense capability of the model?

To answer these questions more comprehensively, we propose a formal framework for evaluating models' robustness to word substitution from the view of quantification. We quantify the magnitude of the perturbation (or the number of substitutions) a model can resist. Figure \ref{introb} visualizes the problems we study in this paper. Robustness radius (safe radius) $r$, which is defined as the magnitude of the perturbation space where no adversarial examples exist, is useful for studying the safe regions of models. In particular, the maximum robustness radius $R$ depicts the boundary of perturbations a model can resist. Apart from safe regions, the vulnerability outside safe regions also needs to be evaluated as it can influence the model's performance in practice. A natural idea is to quantify the number of adversarial examples for a given radius as a metric for robustness.

The main challenge of the evaluation framework is that the perturbation space can be exponentially large, so solving these problems exactly is not feasible in many cases. To overcome this problem, we retreat from the exact computation of $R$ to estimate its upper and lower bounds. An adversarial example with fewer substitutions can provide a tighter upper bound for $R$. Therefore, we repurpose attack methods for evaluating upper bound and design an algorithm called pseudo-dynamic programming (PDP) to craft adversarial examples with as few substitutions as possible. Then, for the lower bound, we find that certifying word-level robustness with a fixed radius can be solved in polynomial time. So we use verification methods to give a lower bound. Finally, we introduce a robustness metric $PR$ which denotes the number of adversarial examples for a given radius. It can provide a quantitative indicator for the models' robustness outside the absolute safe radius. As it is a more difficult problem than calculating the maximum safe radius, we estimate the value of $PR$ with a rigorous statistical guarantee.

We design experiments on two important NLP tasks (text classification and textual entailment) and two models (BiLSTM and BERT) to study our methods empirically. Experiments show that PDP algorithm has a stronger search capability to provide a tighter upper bound for the maximum robustness radius. The robustness metric $PR$ results present an interesting phenomenon: although most well-trained models can be attacked by a few word substitutions with a high success rate, the word-substitution-based adversarial examples distribute widely in perturbation space but just occupy a small proportion. For example, BERT can be successfully ($>89.7\%$) attacked by manipulating $4.5$ words on average on IMDB. However, more than $90.66\%$ regions can resist random word perturbations with a high probability ($>0.9$). We conclude that some adversarial examples may be essentially on-manifold generalization errors, which can explain the reason why these ``vulnerable'' models can generalize well in practice.

\section{Preliminary}
Given a natural language classifier $F:\mathcal{X}\rightarrow \mathcal{Y}$, which is a mapping from an input space to an output label space. The input space $\mathcal{X}$ contains all possible texts $X=w_1, w_2,..., w_n$ and output space $\mathcal{Y}=\{y_1,y_2,...,y_c\}$ contains $c$ possible predictions of an input. $w_i$ is usually a word embedding or one-hot vector. $F_y(\cdot)$ is the prediction score for the $y$ label. Let $P=\{p_1,p_2,..., p_m\}$ be the set of perturbable positions. For each perturbable position $p \in P$, there is a set $S(X,p)$ which contains all candidate words for substitution without changing the semantics (the original word $w_{p}$ is also in $S(X,p)$). Figure \ref{introb} is a schematic diagram and contains explanations of some notations.

\begin{definition} [Adversarial Example]
Consider a classifier $F(x)$. Given a sequence $X$ with gold label $y^*$ and $X'=w_1', w_2',..., w_n'$ which is a text generated by perturbing $X$, $X'$ is said to be an adversarial example if:
\begin{equation}
F(X') \neq y^* \label{Eq:AdvExample}
\end{equation}
\end{definition}

\begin{definition}
	A perturbation space $\Omega(X)$ of an input sequence $X$ is a set containing all perturbations generated by substituting the original word by candidate words in $S(X,p)$ for each perturbable position $p \in \mathcal{P}$. 
\end{definition}
The cardinality of $\Omega(X)$ is $\prod_{p \in \mathcal{P}} |S(X,p)|$.

\begin{definition}[Word-level Robustness]
	Consider a classifier $F(x)$. Given a sequence $X$ with gold label $y^*$, classifier $F$ is said to be robust in the perturbation space $\Omega(X)$ if the following formula holds:
	\begin{equation} \label{Eq:Robustness}
	\forall X'.\ X' \in \Omega(X) \Rightarrow F(X')=y^*  
	\end{equation}
\end{definition}

If a classifier is not robust in $\Omega(X)$, we also want to know what maximum perturbation it can resist. We use $L_0$ distance $r$ to describe the degree of perturbation, which is also called \textbf{robustness radius} or \textbf{safe radius}. The maximum robustness radius is denoted as \textbf{$R$}. 

\begin{definition}[Word-level $L_0$-Robustness]\label{eq:L0}
	Consider a classifier $F(x)$. Given an $L_0$ distance $r$ and a sequence $X$ with gold label $y^*$. Let $\Omega_{r}(X):=\{X' : X' \in \Omega(X) \wedge \left\|X'-X\right\|_0\leq r\}$ and $\left\|\cdot \right\|_0$ denote the number of substituted words. The classifier $F$ is said to be robust with respect to $\Omega_{r}(X)$ if the following formula holds true:
	\begin{equation} \label{Eq:L0Robustness}
	\forall X'.\ X' \in \Omega_r(X) \Rightarrow F(X')=y^* 
	\end{equation}
\end{definition}

If formula (\ref{Eq:L0Robustness}) is true, that means neural network can resist any substitutions in $\Omega_r(X)$. For the point-wise robustness metric, substitution length and ratio can be easily converted to each other.


\subsection{Problems}
	From a high-level perspective, there are four types of relevant problems:
	\begin{itemize}
		\item \textbf{Type-1} (\texttt{Satisfaction problem}). Find an adversarial example in the perturbation space. It helps to prove that a neural network is unsafe in a certain input space. 
		\item \textbf{Type-2} (\texttt{Optimization problem}). Find the adversarial example with minimal perturbation. This can help us figure out the boundary of safe regions.
		\item \textbf{Type-3} (\texttt{Proving problem}). Certify the absence of adversarial examples in the perturbation space. In other word, prove that the formulas like (\ref{Eq:Robustness}) or (\ref{Eq:L0Robustness}) is true. This problem can prove that the network is absolutely safe in certain input spaces.
		\item \textbf{Type-4} (\texttt{Counting problem}). Give the number of adversarial examples in the perturbation space. It further investigates the model's susceptibility outside the absolutely safe radius.
	\end{itemize}
	
    In recent years, most relevant works focused on developing effective attacking algorithms for generating adversarial examples \cite{JiaRGL19,Huang19,YeGL20}, which can be viewed as ``\textbf{Type-1 problem}": finding an adversarial example in the perturbation space. However, finding an adversarial example or not can not reflect the model's defense ability in the whole perturbation space. Type-2$\sim$4 problems are more informative and remains to be studied, which are our focuses in this work.
	
	These four problems present different levels of difficulty. In more details, Type-1 problem is in $\mathcal{NP}$; Type-2 problem is $\mathcal{NP}$-hard; Type-3 problem is in $\mathcal{C}o\mathcal{NP}$ \cite{KatzBDJK17}; and Type-4 problem is $\#\mathcal{P}$-hard ($\mathcal{NP}\subseteq \#\mathcal{P}$) \cite{BalutaSSMS19}. Type-3 problem is the complement of Type-1 problem. Sometimes, Type-1 and Type-3 problems are not strictly distinguished, so certifying robustness is sometimes said to be in $\mathcal{NP}$ as well. These conclusions are drawn when the tasks and neural networks have no restrictions.

\section{Methods}
In this section, we propose a formal framework for evaluating robustness to word substitution perturbation. We first study the upper and the lower bound for the safe boundary which belongs to Type 2 and Type 3 problem respectively. Then we use a statistical inference method to quantify the adversarial examples outside a safe region with a rigorous guarantee, which is a Type 4 problem. For a more clear state, we organize the following sections according to the three problems.


\subsection{Type-2 Problem: Pseudo-Dynamic Programming for Crafting Adversarial Examples}\label{PDPCAE}
 As shown in Figure \ref{introa}, if an adversarial example is found in $\Omega_r(X)$, it means $\Omega_r(X)$ is not $L_0$-Robust according to Definition 4 or the maximum robustness radius of the model must be lower than $r$. An adversarial example offers an upper bound for the maximum robustness radius. Naturally, we wonder about a tighter upper bound for estimating safe boundaries. So, we design an efficient algorithm to find adversarial examples with fewer substituted words in $\Omega(X)$. The algorithm can not only find high-quality adversarial examples, but also provide a tighter upper bound for robustness radius in $\Omega_{r}(X)$. The basic idea of our method is inspired by dynamic programming. 
\subsubsection{Methodology}
Finding the optimal adversarial example $X'$ can be seen as a combinatorial optimization problem with two goals:
\begin{itemize}
	\item [i)] Optimize the output confidence score of $F(X')$ to fool the classifier.
	\item [ii)] Minimize the number of substituted words (i.e. Minimize $\left\|X'-X\right\|_0$).
\end{itemize}

We consider the optimizing procedure is in correlation with a time variable $t$. Let $A(X, t)$ denote the text set containing all combinations of word substitutions for first $t$ perturbed positions $\{p_1, p_2, ...p_t\}$. $\mathrm{Opt}[(A(X, t))]$ denotes the operation to get the optimal adversarial example from $A(X, t)$. Operation $A(X,t-1) \times S(X,p_t)$ means substitute the $p_t$-th position with candidate words in $S(X,p_t)$ for all texts in $A(X,t-1)$. Then we get the optimal adversarial example from $A(X,m)$ in $m$ steps:
\begin{equation*}
\mathrm{Opt}[A(X, t)]:=\mathrm{Opt}[A(X,t-1) \times S(X,p_t)]
\end{equation*}
where $|P|$=$m$, $t \in \{1,..., m\}$ and $A(X,0)$=$\{X\}$. This procedure can guarantee to find the optimal adversarial example. However, it has exponential time complexity as the size of $A(X, t)$ increases exponentially with $t$. 

We make some relaxations for this procedure to ensure it can be executed in polynomial time. At step $t$, we only keep top $K$ texts in $A(X,t-1)$ which are considered to be more promising in generating adversarial examples. The others will be forgotten at this step. In this context, we have:
\begin{equation}\label{PDP}
A(X, t):=\mathrm{TopK}(A(X,t-1)) \times S(X,p_t)
\end{equation}

This relaxation comes at the cost of the guarantee of finding the optimal adversarial example. Due to that, the recurrence relation \ref{PDP} is similar to the dynamic programming equation, we call it \textbf{pseudo-dynamic programming (PDP)}.

\begin{algorithm}[tb]
	\caption{PDP (Pseudo-Dynamic Programming)}
	\label{alg:algorithm1}
	\begin{algorithmic}[1] 
		\REQUIRE ~\\
		 $F$: A classifier \\
		 $X$: An input text with $n$ words \\
		 $\tau$: the maximum percentage of words for modification.
		 \ENSURE An adversarial example $X'$ \textbf{\textit{or}} Failed.
		\STATE $A(X,0) \leftarrow \{X\}$ ;
		\STATE $P_1\leftarrow \emptyset, P_2\leftarrow \{p_1,p_2,...p_m\}$	;
		\FORALL {$t \leftarrow 1$ \TO $m$}
		\STATE  $A(X,t-1) \leftarrow \mathrm{TopK}(A(X,t-1))$;
		\STATE $p* \leftarrow \mathop{\arg\max}_{p \in P_2}\{I_p(A(X,t-1),p)\}$;
		\STATE $A(X, t) \leftarrow A(X,t-1) \times S(X,p^*)$;
		\STATE $P_1\leftarrow P_1\cup \{p*\}, P_2\leftarrow P_2\setminus \{p*\}$;
		\IF{$\exists X' \in A(X, t), F(X') \neq y*$ \textbf{and} $\left\|X'-X\right\|_0 \leq \tau \cdot n$}
		\STATE $X' \leftarrow$ the best adversarial example in $A(X, t)$
		\RETURN $X'$ 
		\ENDIF
		\ENDFOR
		\RETURN Failed
	\end{algorithmic}
\end{algorithm}

Notice that the number of substituted words of all texts in  $A(X,t)$ is less than $t$. So, when an adversarial example is found at an earlier time $t$, it has greater chances to achieve the goal (ii) better. So, we make use of the future information to help the procedure encounter an adversarial example at an earlier time $t$. At time $t-1$, the perturbable position set $P$ can be divided into two sets $P_1=\{p_{i}\}_{i=1}^{t-1}$ and $P_2=\{p_i\}_{i=t}^{m}$. $P_1$ is the set of positions that have been considered and $P_2$ is the set of positions to be considered in the future. Then we look ahead and pick the best position $p^*$ in $P_2$ to increase the chance of finding an adversarial example in the next time $t$. So the recurrence relation \ref{PDP} can be optimized as:
\begin{equation}\label{PDP1}
A(X, t):=\mathrm{TopK}(A(X,t-1)) \times S(X,p^*)
\end{equation}

This pseudo dynamic programming procedure is designed for GPU computing. It can make good use of the characteristics of parallel computing. For each step, the texts in $A(X, t)$ can be fed into classifier $F$ simultaneously as a batch to find adversarial examples and calculate evaluation scores.


\subsubsection{Score Functions}
Next, we explain how to realize $TopK(\cdot)$ for remembering \textbf{history} information and how to look ahead for the \textbf{future} in finding $p*$, which is the key to the PDP.

\paragraph{TopK($\cdot$)}
We use the score $I_s(X')$ to measure the importance of a text $X' \in A(X,t)$. It can be:
\begin{itemize}
	\item $I_s(X'):=1-F_{y*}(X')$ \ \hfill Untargeted attack
	\item $I_s(X'):=F_{\hat{y}}(X')$ \ \hfill Targeted ($\hat{y}$) attack
\end{itemize}
Operation $\mathrm{TopK}(\cdot)$ will preserve $K$ texts with highest score $I_s$. For an untargeted attack, it will preserve $K$ texts with the lowest confidence score for the gold label; For a targeted attack, it will preserve $K$ texts with the highest confidence score for the expected output label $\hat{y}$.

\paragraph{Looking Ahead} 
We call $A(X,t)$ as a \textbf{configuration} at time $t$. Let $X_{w_p\leftarrow w}$ denote the text after replacing the word $w_p$ in position $p$ of $X$ by $w$. The importance score of the perturbed position $p$ under the current configuration $A(X,t)$ is $I_p(A(X,t),p)$. It can be:
\begin{itemize}
	\item Untargeted attack: \\
	$I_p(A(X,t),p):=1-\min\limits_{X'\sim A,w \in S(X,p)}\{F_{y*}(X_{w_p \leftarrow w}')\}$
	\item  Targeted attack:\\
	$I_p(A(X,t),p):=\max\limits_{X'\sim A, w \in S(X,p)}\{F_{\hat{y}}(X_{w_p \leftarrow w}')\}$
\end{itemize}
where $X'\sim A$ means drawing some texts from $A(X,t)$ with probability proportional to $I_s(X')$. Then we have the position $p*$, which has the highest score $I_p$, for the next step $t$ to consider:
\begin{equation*}
p*:=\mathop{\arg\max}_{p \in P_2}\{I_p(A(X,t-1),p)\}
\end{equation*} 

Under the white-box setting, gradient information also can be used to measure the importance of position $p$.

The overall PDP algorithm is shown in Algorithm 1. It is a polynomial-time algorithm ($\mathcal{O}(n^2 \cdot poly(|F|,n))$ in the worst case, and the proof is in the supplementary material). $poly(|F|,n)$ represents prediction time of classifier $F$ for an input with length $n$. It is a polynomial function.

\subsection{Type-3 Problem: Robustness Verification}\label{sec:Robustness_Certification}
Verification is a method to prove the correctness of a system with respect to a certain property via formal methods of mathematics. If we can prove formula \ref{Eq:L0Robustness} is true for a certain radius $r$ (Type-3 problem), that means $r$ is a lower bound of maximum safe radius. Via combining the upper and lower bound, we can figure out the boundary of the safe regions. Generally speaking, proving is much more difficult than find a counter example (Type-1 problem), which needs to enumerate the exponential space or design a theorem proving algorithm. Several over-approximate verification methods like Interval Bound Propagation (IBP) \cite{JiaRGL19,HuangSWDYGDK19} have recently been introduced from image to NLP. Limited by time cost, scaling to large neural networks is a challenge for these methods. In this section, we introduce a property of $L_0$-robustness, which is helpful for certifying robustness when radius $r$ is fixed. It can also be used to improve the efficiency of other verification methods.

\begin{theorem}\label{theoremP}
	For any fixed $r$, Type-3 problem is in time complexity class $\mathcal{P}$.
\end{theorem}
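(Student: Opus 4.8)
The plan is to give a direct enumeration argument that exploits the fact that $r$ is a fixed constant. First I would restate the Type-3 instance: given $F$, $X$, gold label $y^{*}$, and the candidate sets $\{S(X,p)\}_{p\in\mathcal{P}}$, decide whether formula (\ref{Eq:L0Robustness}) holds, i.e.\ whether $\Omega_r(X)$ contains no adversarial example. The starting observation is a decomposition of $\Omega_r(X)$: a perturbation lies in $\Omega_r(X)$ exactly when it leaves all but at most $r$ positions unchanged and replaces each chosen position $p$ by a word of $S(X,p)$. Hence
\begin{equation*}
\Omega_r(X)=\bigcup_{\substack{Q\subseteq\mathcal{P}\\ |Q|\le r}}\Omega(X\mid Q),
\end{equation*}
where $\Omega(X\mid Q)$ is the full perturbation space obtained by allowing substitutions only at positions in $Q$. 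This is the ``property of $L_0$-robustness'' alluded to before the theorem: certifying $L_0$-robustness at radius $r$ is equivalent to certifying ordinary robustness on each of the sub-spaces $\Omega(X\mid Q)$.

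Next I would bound the amount of work. The number of admissible subsets $Q$ is $\sum_{j=0}^{r}\binom{m}{j}\le (r+1)\,m^{r}=O(n^{r})$ since $m\le n$; for fixed $r$ this is a polynomial in the input length. For each such $Q$, the sub-space $\Omega(X\mid Q)$ has cardinality $\prod_{p\in Q}|S(X,p)|\le C^{r}$, where $C=\max_{p}|S(X,p)|$ is the largest candidate-set size; since $C$ is bounded (by a constant, or at worst by a polynomial in the instance size such as the vocabulary size), $C^{r}$ is again polynomial for fixed $r$. Therefore $|\Omega_r(X)|=O(m^{r}C^{r})$ is polynomial in the size of the instance.

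The algorithm then simply enumerates every $X'\in\Omega_r(X)$ --- iterating over all admissible $Q$ and, for each, over all choices of replacement words --- evaluates $F(X')$ in time $poly(|F|,n)$, and checks whether $F(X')=y^{*}$. It outputs ``robust'' iff the test succeeds for every enumerated point; duplicates (a perturbation whose support is strictly smaller than $|Q|$ appears under several $Q$) are harmless and can be ignored or removed. Correctness is immediate from Definition \ref{eq:L0}, and the total running time is $O\big(m^{r}C^{r}\cdot poly(|F|,n)\big)$, which is polynomial whenever $r$ is fixed; this proves that the Type-3 problem lies in $\mathcal{P}$.

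The only point that needs care, and the natural obstacle to flag, is the role of $r$ and of the candidate-set sizes: the bound is polynomial in $n$ precisely because $r$ sits in the exponent as a constant (so the statement is of the ``slice-wise polynomial'' kind, matching the wording ``for any fixed $r$''), and because each $|S(X,p)|$ is assumed bounded. If $r$ were part of the input, the problem would again be $\mathcal{C}o\mathcal{NP}$-hard, so the fixed-radius restriction is essential. I would also remark, echoing the paragraph before the theorem, that the same decomposition can be layered on top of an over-approximate verifier such as IBP: running the verifier separately on each $\Omega(X\mid Q)$ gives tighter certificates than running it once on $\Omega_r(X)$, which is the practical payoff of the property.
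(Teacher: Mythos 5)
Your proof is correct and follows essentially the same route as the paper: bound $|\Omega_r(X)|$ by roughly $\binom{m}{r}$ choices of positions times at most $v^r$ (your $C^r$) word choices, then exhaustively evaluate $F$ on every element, giving $O((nv)^r)\cdot poly(|F|,n)$ time, polynomial for fixed $r$. The only difference is cosmetic (your explicit union over support sets $Q$ versus the paper's direct cardinality bound); note that your closing aside that the problem with $r$ in the input ``would again be $\mathcal{C}o\mathcal{NP}$-hard'' is not established in the paper (which only places it in $\mathcal{C}o\mathcal{NP}$), but it plays no role in the proof.
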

\begin{proof}
	Suppose that a classifier $F$ can output a prediction for an input $X$ with length $n$ in $poly(|F|,n)$ time and $X$ has $m$ perturbable positions. For a given $r$, we have:
	\begin{equation*}
	|\Omega_r(X)| \leq \binom{m}{r}\cdot v^r\leq\binom{n}{r}\cdot v^r
	\end{equation*}
	where $v=\max_{p\in P}\{|S(X,p)|\}$. We know that the size of $\Omega_r(X)$ is bounded by $\mathcal{O}((nv)^r)$. So, one can test all the possible substitutions in $\Omega_r(X)$ in $\mathcal{O}((nv)^r) \cdot poly(|F|,n)$ time to answer problems of Type-3.
\end{proof}

Such conclusions are specific for NLP area owing to its discrete nature. In many cases, the upper bound of $r$ can be given by our PDP algorithm. In such a situation, we can directly enumerate all the possible substitutions to prove the absence of adversarial examples within $r$ (or formula \ref{Eq:L0Robustness} holds) in polynomial time. The enumeration procedure accomplished by a simple prover (SP), returns ``Certified Robustness'' or `` Found an adversarial example''.  After the absence of adversarial examples in $\Omega_r(X)$ is proved, $r$ is a lower bound for the maximum $L_0$-robustness radius.

All the possible substitutions compose a polynomial-time verifiable formal proof for the absence of adversarial examples. A checkable proof can make the result more convincing. If an algorithm finds an adversarial example, we can check the result easily. However, if an algorithm reports no adversarial examples, it is difficult to figure out whether there are indeed no adversarial samples or the verification algorithm has some bugs.

Under the white-box setting, the gradient information can be used to accelerate the verification algorithm. The basic idea is to test more sensitive positions first. Once an adversarial example occurs, the program can be terminated. Let $\left\|\partial F_{y^*}(X)/\partial w_p\right\|_1$ denote sensitivity score of perturbable position $p$, we can pre-sort the perturbable positions in $P$ based on the sensitivity score.

\subsection{Type-4 Problem: Robustness Metric}\label{RM}
Why are neural networks often fooled by small crafted perturbations, but have good generalization to noisy inputs in the real environment? How about the ability of a model to resist perturbation outside the robust radius? These questions promote us to analyze robustness from another perspective: the quantity of adversarial examples. Sometimes, it is difficult to enumerate all the adversarial examples in the perturbation space. 

We relax the universal quantifier ``$\forall$'' in formula \ref{Eq:Robustness} to a quantitative version as word-level robustness metric $PR$:
\begin{equation} \label{Eq:PR}
PR:=\frac{|\{X' : X' \in \Omega_r(X) \wedge F(X')=y^*\}|}{|\Omega_r(X)|},
\end{equation}
where we can see that $1$-$PR$ is the proportion of adversarial examples. Therefore, the higher the $PR$ value is, the less vulnerable the classifier $F$ is to be fooled by random perturbations around the point $X$. When $PR$=1, it is equivalent to formula \ref{Eq:Robustness}. 

Apparently, the exact computation of $PR$ is essentially a \textbf{Type-4 problem}. For a long input sequence, calculating the value of $PR$ is infeasible at the moment due to the limitation of computational power. As an alternative, we estimate $PR$ via a statistical method. Suppose that $X_1,X_2,...,X_N$ are taken from $\Omega_r(X)$ with uniform sampling, then an estimator $\hat{PR}$ for $PR$ is:
\begin{equation}
\hat{PR}:=\frac{1}{N} \sum \limits_{i=1}^{N}\mathbb{I}(F(X_i)=y^*)
\end{equation}

The satisfaction of $(F(X_i)=y^*)$ can be seen as Bernoulli random variable $Y_i$, i.e., $Y_i \sim Bernoulli(PR)$. So, if we want estimator $\hat{PR}$ to satisfy a prior guarantee such as the probability of producing an estimation which deviates from its real value $PR$ by a certain amount $\epsilon$ is less than $\delta$, the following must hold:
\begin{equation}\label{Hoeffding}
Pr(|\hat{PR}-PR|<\epsilon) > 1-\delta
\end{equation}
Based on Hoeffding’s inequality:
\begin{equation*}
Pr(| \frac{1}{N} \sum \limits_{i=1}^{N}Y_i-PR| \geq \epsilon) \leq 2e^{-2N\epsilon^2}
\end{equation*}
For given parameters $\epsilon$ and $\delta$, the estimator $\hat{PR}$ satisfies formula (\ref{Hoeffding}) if:
\begin{equation}
N>\frac{1}{2\epsilon^2}\ln{\frac{2}{\delta}}
\end{equation}

$\hat{PR}$ is a metric for a model's susceptibility to random perturbations with rigorous statistical guarantees. As the error bound and sample complexity is similar to those in PAC theory, we also call it PAC-style robustness metric.

\section{Experiments}
In this section, we design three sets of experiments to study the three problems and methods we proposed. 
\subsection{General Experiment Setup}
\paragraph{Tasks}
We conduct experiments on two important NLP tasks: text classification and textual entailment. MR \cite{mr} and IMDB \cite{imdb} are sentence-level and document-level sentiment classification respectively on positive and negative movie reviews. SNLI \cite{snli} is used to learn to judge the relationship between two sentences: whether the second sentence can be derived from entailment, contradiction, or neutral relationship with the first sentence.
\paragraph{Target Models}
For each task, we choose two widely used models,  bidirectional LSTM (BiLSTM) \cite{bilstm} and BERT \cite{bert} as the attacking target models. For BiLSTM, we used a 1-layer bidirectional LSTM with 150 hidden units, and 300-dimensional pre-trained GloVe \cite{glove} word embeddings. We used the 12-layer based version of BERT model with 768 hidden units and 12 heads, with 110M parameters. Details of the data and the classification accuracy on the test set of the models are listed in Table \ref{datasets}.

\begin{table}[h]
	\centering
	\resizebox{0.5\textwidth}{!}{
		\begin{small}
			\begin{tabular}{cccc|cc} 
				\toprule
				Dataset&Avg Len&Train&Test&BiLSTM&BERT\\
				\hline 
				MR&20(2/50)&9K&1K&82.47&89.60\\
				IMDB&215(6/2K)&25K&25K&91.23&92.27\\
				\hline
				SNLI&8(2/30)&570K&10K&84.43&90.50\\
				\bottomrule
			\end{tabular}
		\end{small}
	}
	\caption{Overview of the datasets and the test accuracy of target models. The numbers in brackets of column ``Avg Len'' are the minimum/maximum length. }
	\label{datasets}
\end{table}	

\subsection{Type-2 Problem: Attack Evaluation}
\paragraph{Baselines} We use two state-of-the-art adversarial crafting methods (TextFooler \cite{textfooler} and SemPSO \cite{seme}) as references to compare the search capability of PDP. TextFooler is a greedy algorithm and SemPSO is a particle-swarm-based algorithm. They all focus on Type 1 problem while PDP focuses on Type 2 problem.

\begin{table*}
	\resizebox{1\textwidth}{!}{
		\centering
		\begin{small}
			\begin{tabular}{lcc||ccc|ccc|ccc}
				\toprule
				\multirow{2}*{Dataset} & \multirow{2}*{Model} & \multirow{2}*{\#Attacks} & \multicolumn{3}{c|}{SemPSO}&\multicolumn{3}{c|}{TextFooler}&\multicolumn{3}{c}{PDP}\\
				&&&\#Succ&\#Win&\%S&\#Succ&\#Win&\%S&\#Succ&\#Win&\%S \\
				\hline
				\multirow{2}*{MR}&BiLSTM&880&636(72.27\%)&0&10.64&484(55.00\%)&0&12.09&\textbf{655(74.43\%)}&\textbf{33}&\textbf{10.44}\\
				&BERT&956&580(60.67\%)&0&12.10&323(33.79\%)&0&13.96&\textbf{621(64.96\%)}&\textbf{30}&\textbf{11.80}\\
				\hline
				\multirow{2}*{IMDB}&BiLSTM&1000&947(94.7\%)&0&4.58&854(85.4\%)&0&6.78&\textbf{989(98.9\%)}&\textbf{599}&\textbf{3.11}\\
				&BERT&1000&871(87.1\%)&0&4.31&714(71.4\%)&0&8.47&\textbf{899(89.9\%)}&\textbf{498}&\textbf{2.87}\\
				\hline
				\multirow{2}*{SNLI}&BiLSTM&1000&505(50.5\%)&0&15.99&592(59.2\%)&0&15.76&\textbf{764(76.4\%)}&\textbf{31}&\textbf{14.91}\\
				&BERT&1000&587(58.7\%)&0&16.10&636(63.6\%)&0&15.83&\textbf{845(84.5\%)}&\textbf{30}&\textbf{15.09}\\
				\bottomrule
			\end{tabular}
		\end{small}
	}
	\caption{The attack results of different methods. \#Attacks is the number of texts to be attacked. \#Succ is the number of successful attacks.  \#Win is the number of successful attacks crafted with the least substitutions for the same texts among various attack methods. \%S is the avarage percentage of substitutited words.}
	\label{attack_reslults}
\end{table*}

\paragraph{Metrics} We evaluate the performance of these attack methods including the rate of successful attacks and the percentage of word substitution. A smaller percentage (or number) of word substitution means a tighter upper bound for the maximum $L_0$-robustness radius.

\paragraph{Settings}
For a fair comparison, we set the same candidate set and constraints for different attack methods. The candidate is generated by HowNet \cite{hownet} and similarities of word embeddings. HowNet is arranged by the sememe and can find the potential semantic-preserving words. Word embeddings can further help to select the most similar candidate words. So, we generate $S(X,p)$ via cleaning the synonyms obtained by HowNet with cosine similarity of word embeddings. We reserve top $\eta$ ($\eta=5$) synonyms as candidates for each position. 

For MR, we experiment on all the test texts classified correctly. For IMDB and SNLI, we randomly sample 1000 texts classified correctly from the test set. Following \cite{AlzantotSEHSC18,seme}, only the hypotheses are perturbed for SNLI. The adversarial examples with modification rates less than 25\% are considered valid.

\paragraph{Attack Results} 	We present the average percentage of substitutions (\%S) in Table \ref{attack_reslults} and the number of times each method ``wins'' the others in terms of substitution length (\#Win). The experimental results show that PDP always gives adversarial examples with fewer substitutions. Especially for the long-text dataset, IMDB, 599 (59.9\%) adversarial examples found by PDP contains the least word substitutions for BiLSTM (the remaining 40.1\% holds the same number of substitutions with others). The examples crafted by PDP contain very few substitutions, such as average \textbf{4.52} word substitutions for BERT on IMDB whose average number of words is 215. The comparison of the substitution length on IMDB is shown in Figure \ref{num_changes}.  Besides, PDP achieves the highest attack success rates on all three datasets and two target models. These experimental results indicate PDP has stronger search capabilities. Then, we repurpose PDP attack to evaluate the robustness, and Figure \ref{upper_bound} shows that PDP can provide a tighter bound for the maximum robustness radii compared with other attacking methods. More experimental results are shown in the supplementary.


\begin{figure}
	\centering
	\subfigure[IMDB-BiLSTM]{
		\includegraphics[width=0.22\textwidth,height=0.13\textheight]{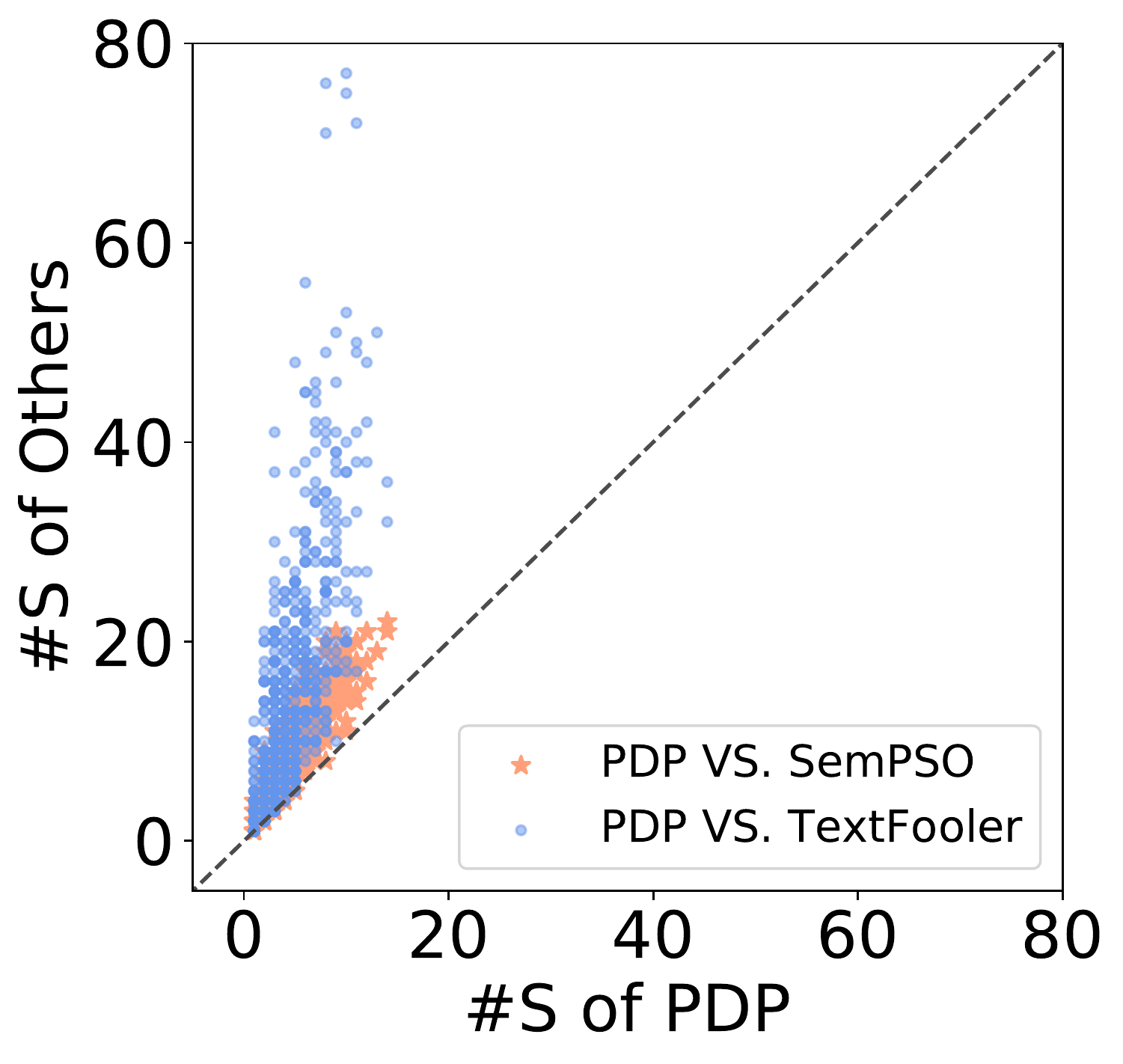}
	}
	\subfigure[IMDB-BERT]{
		\label{imdb-bert}
		\includegraphics[width=0.22\textwidth,height=0.13\textheight]{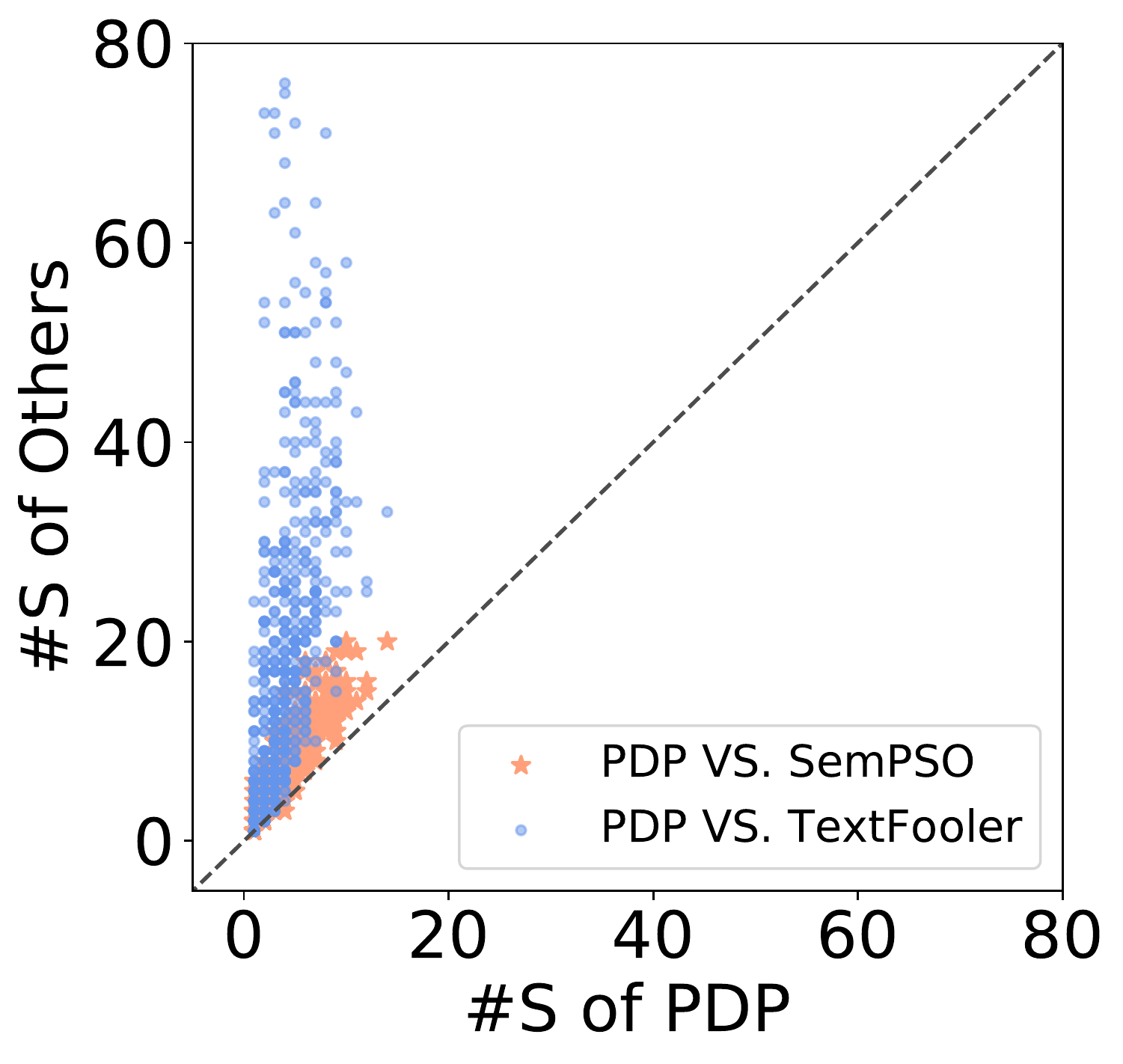}
	}
	\caption{Comparision of the number of substituted words of different methods on IMDB. Each point represents a text ($x$-axis is the number of substituted (\#S) words of PDP and $y$-axis is that of other attack methods). Points over the diagonal are where PDP finds an adversarial example with fewer substitutions.}
	\label{num_changes}	
\end{figure}





\subsection{Type-3 Problem: Robustness verification} \label{Rcert}
For a given $L_0$ distance $r$ ($r$=$1$\ to\ $4$), the certified results on 200 randomly sampled test instances are shown in Table \ref{certi_robust}. We can have the three findings below. (1) The percentage of certified robustness is decreasing with the increase of radius $r$. (2) For many short-text tasks (MR and SNLI), considering $r\leq4$ is sufficient because most regions cannot resist 4-word substitutions. For example, only 6.42\% regions of BERT can resist any 4-word substitutions adversarial attack on SNLI. (3) For the long-text task IMDB, BERT has more regions (61.52\%) that can resist any 3-word substitutions attack compared with BiLSTM. It takes a long time to certify robustness when $r$=$4$, so we don't show the results. Experimental results also show that this simple verification method is effective for many NLP tasks.

\begin{table*}
	\resizebox{1\textwidth}{!}{
		\begin{tabular}{lc|rrrr|rrrr|rrrr|rrrrr}
			\toprule
			\multirow{3}*{Dataset} & \multirow{3}*{Model}& \multicolumn{4}{c|}{r=1} &\multicolumn{4}{c|}{r=2}&\multicolumn{4}{c|}{r=3}&\multicolumn{4}{c}{r=4}\\
			\cline{3-18}
			&&\multicolumn{2}{c}{Found}& \multicolumn{2}{c|}{Certified}&\multicolumn{2}{c}{Found}& \multicolumn{2}{c|}{Certified}&\multicolumn{2}{c}{Found}& \multicolumn{2}{c|}{Certified}&\multicolumn{2}{c}{Found}& \multicolumn{2}{c}{Certified}\\
			&&\%F&T(s)&\%C&T(s)&\%F&T(s)&\%C&T(s)&\%F&T(s)&\%C&T(s)&\%F&T(s)&\%C&T(s)\\
			\hline
			\multirow{2}*{MR}&BiLSTM&36.00&0.01&64.00&0.02&58.00&0.04&42.00&0.05&72.00&0.30&28.00&0.27&78.00&3.62&22.00&2.26\\
			&BERT&20.00&0.12&80.00&0.24&40.00&1.95&60.00&2.49&56.50&28.86&43.50&20.95&67.50&256.78&32.50&46.23\\
			\hline
			\multirow{2}*{IMDB}&BiLSTM&15.59&0.04&84.41&0.13&31.99&1.93&68.01&2.89&45.50&2.47&54.50&953.19&-&-&-&-\\
			&BERT&12.66&2.89&87.34&3.11&25.91&3.40&74.09&246.54&38.48&7.97&61.52&6448.39&-&-&-&-\\
			\hline
			\multirow{2}*{SNLI}&BiLSTM&56.90&0.04&43.10&0.01&76.87&0.03&23.13&0.07&82.52&0.14&17.48&0.11&84.63&0.66&15.37&0.23\\
			&BERT&71.43&0.03&28.57&0.01&88.01&0.07&11.99&0.06&92.37&0.47&7.63&0.07&93.58&3.17&6.42&0.04\\
			\bottomrule
		\end{tabular}
	}
	\caption{Certified robustness. ``Found''  and ``Certified'' are the abbreviations for ``an adversarial example found'' and ``certified to be robust'' respectively. \%F and \%C  are the percentage of ``Found''  and ``Certified''. ``T'' is the average time. }
	\label{certi_robust}
\end{table*}

\begin{figure}
	\subfigure[IMDB-BiLSTM]{
		\includegraphics[width=3.6cm,height=0.12\textheight]{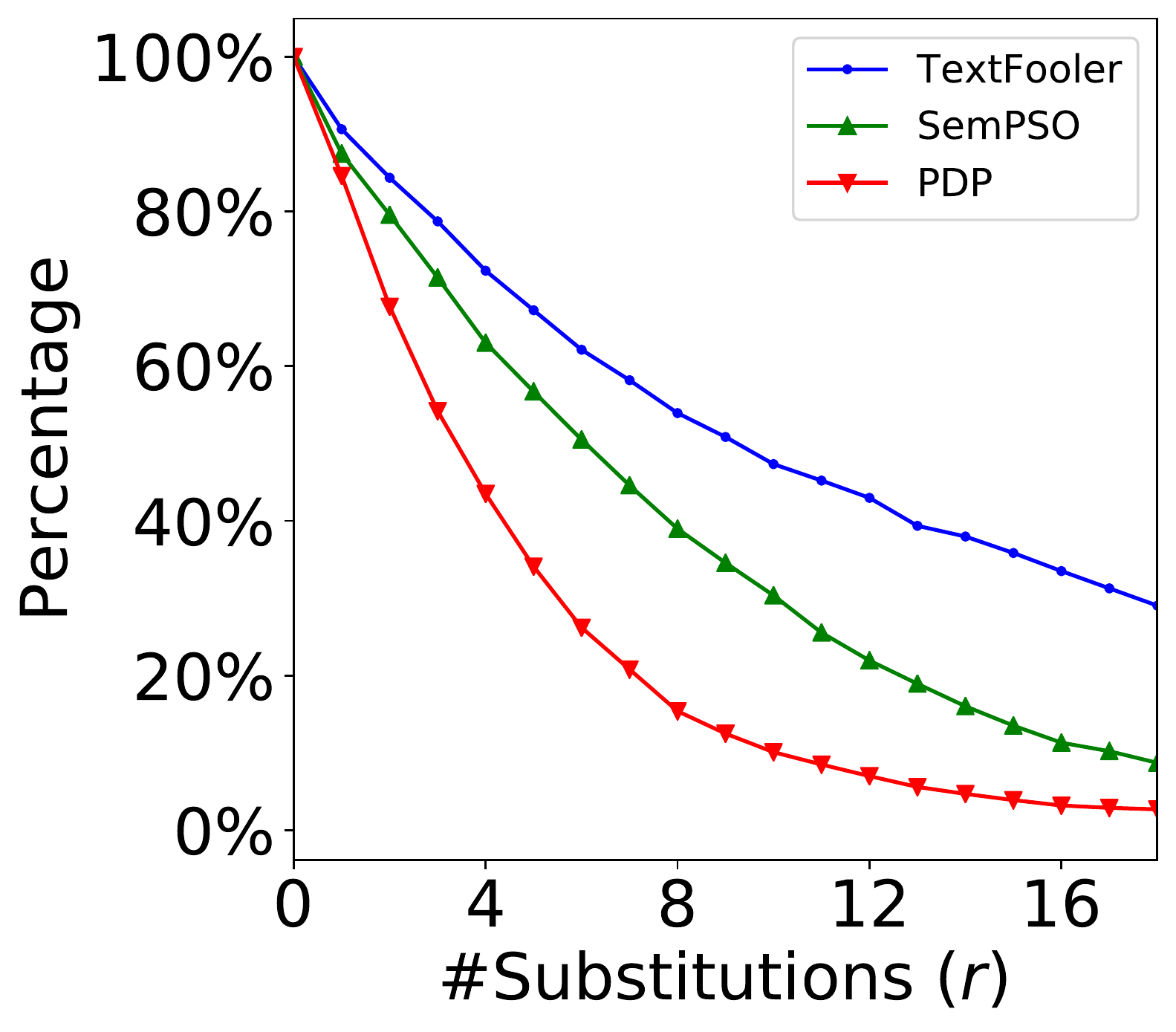}
	}
	\subfigure[IMDB-BERT]{
		\includegraphics[width=3.6cm,height=0.12\textheight]{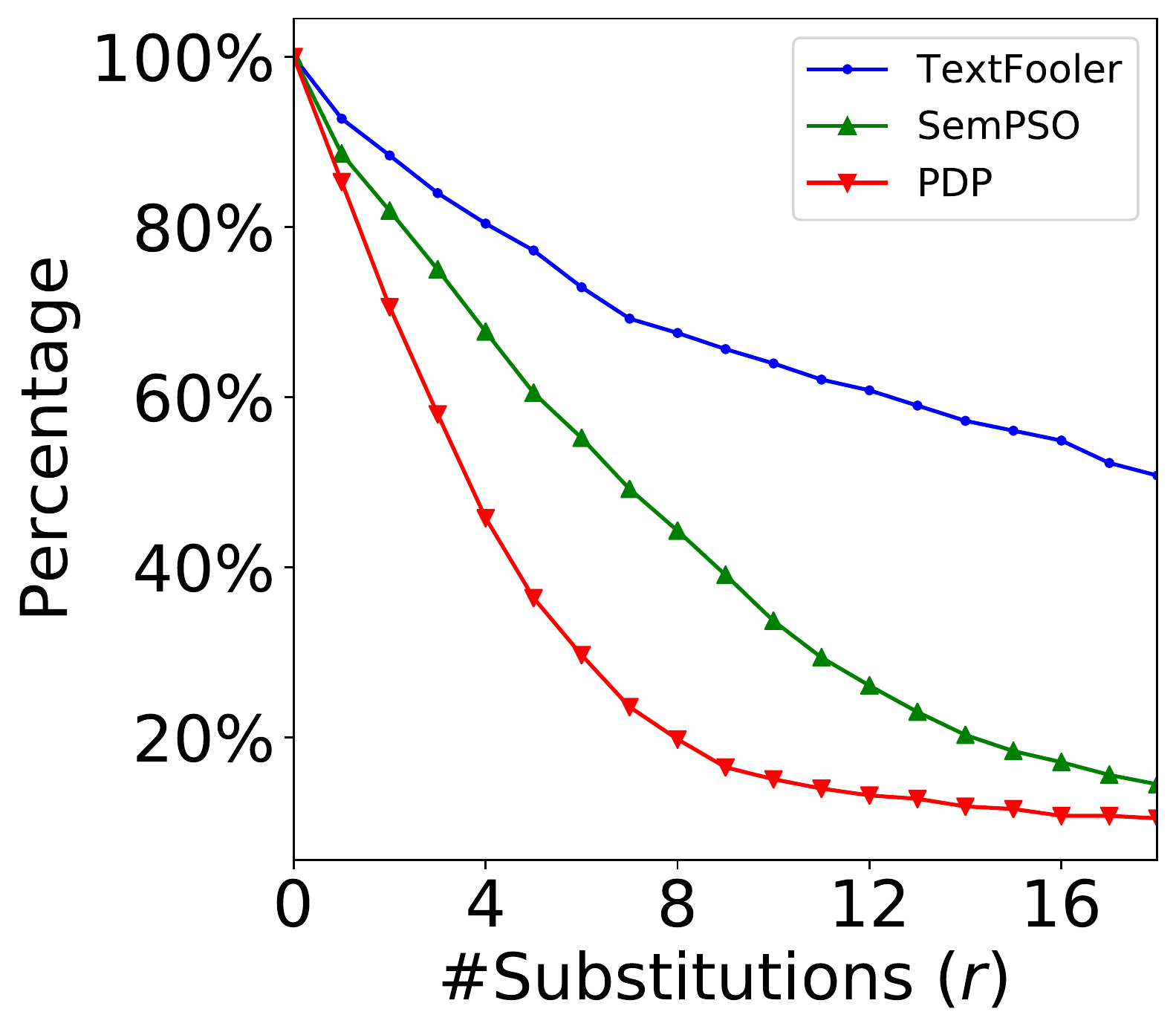}
	}
	\caption{The percentage of regions ($\Omega_r(X)$) that do not yield to different attacking methods in each perturbation radius. $x$-axis can be seen as the upper bounds given by different attacking methods.}
	\label{upper_bound}
\end{figure}

\subsection{Type-4 Problem: Robustness Metric}
\begin{figure*}[h]
	\centering
	\includegraphics[width=0.75\textheight]{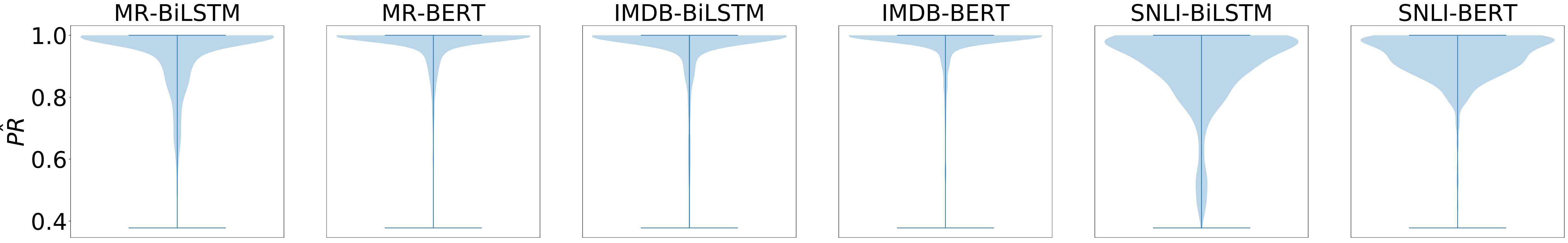}
	\caption{Violin plots of robustness score on the test set when $r$ is 25\% of the length of sentence. The width of $x$-axis represents the frequency of corresponding $\hat{PR}$ in $y$-axis. Top and bottom horizontal lines are the maximum and minimum value of $\hat{PR}$.}
	\label{robus_eval}
\end{figure*} 

We evaluate the robustness score (Equation \ref{Eq:PR}) of different models on different tasks. The evaluation is performed on the randomly sampled 1000 test data and the sample size $N$ is 5000 ($\epsilon$=$0.025$, $\delta$=$0.005$).
The violin plots of $\hat{PR}$ are shown in Figure \ref{robus_eval}. As most attacking algorithms limit the maximum perturbation ratio to smaller than $25\%$, we set r to 25\% of the length of the sentence.

Most of the shadows in all sub-figures are close to the top horizontal line (maximum $\hat{PR}$), which means that most regions have high robustness scores $\hat{PR}$. Take BERT model on IMDB task as an example, $89.9\%$ regions are found with adversarial examples as shown in Table \ref{attack_reslults}, which indicates the ``vulnerability'' of the model. However, via robustness metric, we find that $90.66\%$ regions achieve $\hat{PR}$ larger than $0.9$. It means, most regions ($90.66\%$) can resist random word perturbations with high probability ($>0.9$). A conclusion can be drawn: these well-trained models are usually robust to word substitutions in a non-adversarial environment.

For a well-trained model, the adversarial examples crafted by word substitution are almost everywhere and close to the normal point in the perturbation space, but their proportion is very low. In 2019, Stutz et al. pointed out that on-manifold robustness is essentially generalization and on-manifold adversarial examples are generalization errors \cite{Stutz0S19}. Suppose users' selection from a synonym candidate is similar to the process of rolling a die, which means the usage of a word for position $p$ in $S(X,p)$ is conditional on a latent variable $z_p$, i.e. $Pr(w_p|z_p)$ corresponding to the underlying, low dimensional manifold. All possible substitutions in $\Omega(X)$ can be seen as on the manifold corresponding to a latent variable $Z=(z_{p_1},z_{p_2},..., z_{p_k})$. Thus, the adversarial examples found by attacking algorithms are essentially on-manifold generalization errors. From this perspective, we can explain why a well-trained model like BERT has a good generalization but can be easily attacked by word substitutions. 

For all three tasks, BERT always presents better robustness performance. For instance, on MR task, the proportion of regions with $\hat{PR}$ larger than $0.9$ is $79.22\%$ and $90.97\%$ for BiLSTM and BERT respectively. It means BERT is always more robust outside the safe regions. Our robustness metric presents its superiority to the traditional model metrics including accuracy.

\section{Related Work}
Existing works about the word-level robustness problem mainly focus on three lines of research points. 

\paragraph{Adversarial Examples} Various attack algorithms are developed for generating adversarial examples via substitutions including gradient descent methods \cite{SatoSS018,LSBLS18,Wang0D021}, genetic algorithm \cite{AlzantotSEHSC18}, particle-swarm-based method \cite{seme}, greedy-based methods \cite{Ren19,textfooler} and BERT-based methods \cite{LiMGXQ20,GargR20}. They focus on how to generate adversarial examples effectively and simply regard robustness as the opposite of attack success rate. 

\paragraph{Robustness Verfication} \cite{JiaRGL19,Huang19,ShiZCHH20} migrate the over-approximate method IBP from the image field to certify the robustness in the continuous space based on word embedding. Although they can give a provably robust to all possible perturbations within the constraints, the limitation is that a model which is not robust in continuous space can be robust in discrete space, as the vectors that can fool the model may not correspond to any real words. \cite{YeGL20} introduce a randomized smoothing-based method to certify the robustness of a smoothed classifier. 
Existing robustness evaluation works focus on robustness verification which aims to verify the absolute safe for a given model in the whole perturbation space. They ignore the safe sub-regions and unsafe regions. 

\paragraph{Defense} Naturally, the final goal is to defend against attacking and improve the robustness of models. Adversarial data augmentation (ADA) is one of the most effective empirical methods. \cite{Ren19,textfooler,LiMGXQ20,GargR20,Wang0D021} adopt the adversarial examples generated by their attack methods for adversarial training and achieve some robustness improvement. Adversarial training is another similar method, which incorporates a min-max optimization between adversarial perturbations and the models by adding norm-bounded perturbations to word embeddings \cite{MadryMSTV18,ZhuCGSGL20}. They depend on search algorithms for adversarial examples, so our PDP with better search ability can provide support for these robustness enhancement methods.

\section{Conclusion}
Overall, we build a formal framework to study the word-level robustness of the deep-learning-based NLP systems. We repurpose the attack method for robustness evaluation and design a pseudo-dynamic programming framework for crafting adversarial examples with fewer substitutions to provide a tighter upper bound. Besides, we notice that the absence of adversarial examples within any fixed radius can be verified in polynomial time, and give a simple prover to certify the lower bound. Experimental results show that our methods can provide tighter bounds for robustness evaluation, and most state-of-the-art models like BERT cannot resist a few word substitutions. Further, we discuss the robustness from the view of quantification and introduce a PAC-style metric to show they are robust to random perturbations, as well as explain why they generalize well but are poor in resisting adversarial attacks. It can be helpful to studying defense and interpretability of NLP models. 

\bibliography{aaai22}
\end{document}